\definecolor{vgreen}{RGB}{104,180,104}
\definecolor{vblue}{RGB}{49,49,255}
\definecolor{vorange}{RGB}{255,143,102}
\lstdefinestyle{verilog-style}
{
  language=Verilog,
  basicstyle=\small\ttfamily,
  keywordstyle=\color{vblue},
  identifierstyle=\color{black},
  commentstyle=\color{vgreen},
  numbers=left,
  numberstyle=\tiny\color{black},
  numbersep=10pt,
  tabsize=8,
  moredelim=*[s][\colorIndex]{[}{]},
  literate=*{:}{:}1
}
\definecolor{deepblue}{rgb}{0,0,0.5}
\definecolor{deepred}{rgb}{0.6,0,0}
\definecolor{deepgreen}{rgb}{0,0.5,0}
\lstdefinestyle{python-style}
{
  language=Python,
  basicstyle=\small\ttfamily,
  commentstyle=\color{deepgreen},
  otherkeywords={self},             
  keywordstyle=\color{deepblue},
  emph={MyClass,__init__},          
  emphstyle=\color{deepred},        
  stringstyle=\color{deepgreen},
  frame=tb,                         
  showstringspaces=false            %
}
\newcommand*\@lbracket{[}
\newcommand*\@rbracket{]}
\newcommand*\@colon{:}
\newcommand*\colorIndex{%
  \edef\@temp{\the\lst@token}%
  \ifx\@temp\@lbracket \color{black}%
  \else\ifx\@temp\@rbracket \color{black}%
  \else\ifx\@temp\@colon \color{black}%
  \else \color{vorange}%
  \fi\fi\fi
}
\newtheorem{theorem}{Theorem}
\newtheorem{lemma}{Lemma}
\newtheorem{example}{Example}
\theoremstyle{definition}
\newtheorem{exercise}{Exercise}
\newcommand*{\algrule}[1][\algorithmicindent]{\makebox[#1][l]{\hspace*{.5em}\vrule height .75\baselineskip depth .25\baselineskip}}%
\def\ALG@printindent{%
  \ifnum \theALG@nested>0
  \ifx\ALG@text\ALG@x@notext
  \addvspace{-3pt}
  \else
  \unskip
  \ALG@printindent@tempcnta=1
  \loop
  \algrule[\csname ALG@ind@\the\ALG@printindent@tempcnta\endcsname]%
  \advance \ALG@printindent@tempcnta 1
  \ifnum \ALG@printindent@tempcnta<\numexpr\theALG@nested+1\relax
  \repeat
  \fi
  \fi
}%
\patchcmd{\ALG@doentity}{\noindent\hskip\ALG@tlm}{\ALG@printindent}{}{\errmessage{failed to patch}}
\algnewcommand{\True}{\textbf{true}\space}
\algnewcommand{\False}{\textbf{false}\space}
\algnewcommand{\And}{\textbf{and}}
\algnewcommand{\Or}{\textbf{ or }}
\algnewcommand{\Break}{\textbf{break}}%
\algnewcommand{\Continue}{\textbf{continue}}%
\algnewcommand{\algorithmicgoto}{\textbf{go to}}%
\algnewcommand{\Goto}[1]{\algorithmicgoto~\ref{#1}}%
\algnewcommand{\Input}[1]{\textbf{Input:} #1}
\algnewcommand{\Output}[1]{\textbf{Output:} #1}
\algnewcommand{\Divider}{{\smallskip\color{red}\hrule\smallskip}}
\algnewcommand{\algorithmicvariables}{\textbf{global}}
\algnewcommand{\Global}{\textbf{global}\space}
\title{Deductron --- A Recurrent Neural Network}
\author{Marek Rychlik\\
  University of Arizona\\
  Department of Mathematics, 617 N Santa Rita Rd, P.O. Box 210089\\
  Tucson, AZ 85721-0089, USA\\}
\date{\today}
\subjclass[2010]{%
  92B20, %
  68T05, %
  82C32%
}
\keywords{recurrent neural network, machine learning, Tensorflow, optical character recognition}
\gdef\NumStates{11}
\newenvironment{inact}{%
  \catcode`\_11%
}{%
}
\begin{document}
\begin{abstract}
  The current paper is a study in Recurrent Neural Networks (RNN),
  motivated by the lack of examples simple enough so that they can be
  thoroughly understood theoretically, but complex enough to be
  realistic.  We constructed an example of structured data, motivated
  by problems from image-to-text conversion (OCR), which requires
  long-term memory to decode. Our data is a simple writing system,
  encoding characters 'X' and 'O' as their upper halves, which is
  possible due to symmetry of the two characters.  The characters can
  be connected, as in some languages using cursive, such as Arabic
  (abjad).  The string 'XOOXXO' may be encoded as
  '${\vee}{\wedge}\kern-1.5pt{\wedge}{\vee}\kern-1.5pt{\vee}{\wedge}$'.
  It is clear that seeing a sequence fragment
  '$|\kern-1.8pt{\wedge}\kern-1.5pt{\wedge}\kern-1.5pt{\wedge}\kern-1.5pt{\wedge}\kern-1.5pt{\wedge}\kern-1.8pt|$'
  of any length does not allow us to decode the sequence as '\ldots
  XXX\ldots' or '\ldots OOO \ldots' due to inherent ambiguity, thus
  requiring long-term memory. Subsequently we constructed an RNN
  capable of decoding sequences like this example. Rather than by
  training, we constructed our RNN ``by inspection'', i.e. we guessed
  its weights. This involved a sequence of steps. We wrote a
  conventional program which decodes the sequences as the example
  above.  Subsequently, we interpreted the program as a neural network
  (the only example of this kind known to us). Finally, we generalized
  this neural network to discover a new RNN architecture whose
  instance is our handcrafted RNN. It turns out to be a three-layer
  network, where the middle layer is capable of performing simple
  logical inferences; thus the name ``deductron''. It is demonstrated
  that it is possible to train our network by simulated
  annealing. Also, known variants of stochastic gradient descent (SGD)
  methods are shown to work.
\end{abstract}
\maketitle
\section{Introduction}
Recurrent Neural Networks (RNN) have gained significant attention in
recent years due to their success in many areas, including speech
recognition and image-to-text conversion, Optical Character
Recognition, or OCR. These are systems which respond to sequential
inputs, such as time series.  With skillfull implementation they have
the ability to react to the stimuli \emph{in real time}, which is at
the root of their applications to building intelligent systems.  The
classes of RNN which memorize and forget a certain amount of
information are especially interesting.

Yet, it is hard to find in literature examples of data which can be
easily understood, and which demonstrably require remembering and
forgetting information to operate correctly. In this paper we will
provide such an example of data, define the related machine learning
problem and solve it using typical machine learning tools. Our
analysis will be rigorous whenever possible, reflecting our
mathematical and computer science point of view. Thus, we will
constantly pivot between three subjects (math, computer science and
connectionist artificial intelligence) hopefully providing an
insightful study, which can be continued in various directions by the
reader. We also included a number of exercises varying in the degree
of difficulty which should make reading more fun.

In the current paper specifically, we are interested in explaining the
need for long-term memory, in addition to short-term memory.  In the
last 20 years LSTM (Long-Short Memory) RNNs have been applied to a
variety of problems with artificial intelligence flavor, in
particular, speech-to-text conversion and optical character
recognition \cite{lstm,journals/neco/GersSC00}.  We find that typical
examples used to illustrate LSTM are too complex to understand how the
network performs its task:
\begin{enumerate}
\item Why is there a need for long and short term memory in specific
  problems?
\item What are the necessary ingredients of the neural network
  architecture that can utilize long-short term memory?
\end{enumerate}

In order to have a suitable example of data, we constructed a simple
(artificial) writing system (we will call it the W-language, or ``wave
language''), encoding characters 'X' and 'O' as their upper halves,
i.e. $\vee$ and $\wedge$ (this is possible due to reflectional
symmetry of 'X' and 'O' and no other two Latin characters would do).
The characters \textbf{can be connected}, as in some languages. Thus
'XOOXXO' is encoded in our alphabet as
'${\vee}{\wedge}\kern-1.5pt{\wedge}{\vee}\kern-1.5pt{\vee}{\wedge}$'.
Hence, the written text looks like a sequence of waves, with one
restriction: a wave that starts at the bottom (top), must end at the
bottom (top).

Let us explain the fact that decoding sequences of characters requires
long-term memory.  It is clear that seeing a sequence fragment
'$|\kern-1.8pt{\wedge}\kern-1.5pt{\wedge}\kern-1.5pt{\wedge}\kern-1.5pt{\wedge}\kern-1.5pt{\wedge}\kern-1.8pt|$'
of any length does not allow us to decode the sequence as '\ldots
XXX\ldots' or '\ldots OOO \ldots' due to inherent ambiguity.  Thus, it
is necessary to remember the beginning of the ``wave'' (bottom or top)
to resolve this ambiguity. Hence the need for memory; in fact, we need
to remember what was written arbitrarily long time ago in order to
determine whether a given sequence should be decoded as a sequence of
'X' or as a sequence of 'O'.

Having invented our (artificial) writing system, we construct an RNN
(in some ways similar to LSTM) capable of decoding sequences like the
examples provided above, with 100\% accuracy in the absence of
errors. In the presence of errors, the accuracy should gracefully drop
off, demonstrating robustness; this will not be pursued in the current
paper.

What we will focus on is a construction of the RNN network in an
unusual, and hopefully enlightning way. Rather than proposing a
network architecture in a ``blue skies research'' fashion (or looking
at prior work), we wrote a conventional program which decodes the
sequences as the example above, operating on a binary image
representation, with vertical resolution of three pixels.
Subsequently, we re-interpreted the program as a neural network, and
thus obtained a neural network ``by inspection'' (the only non-trivial
example of this sort we are aware of). We then generalized this neural
network to discover \textbf{a new RNN architecture} whose instance is
our handcrafted RNN.  It turns out to be a three-layer network, where
\textbf{the middle layer is capable of performing simple logical
  inferences}; thus the name \textbf{deductron} will be used for our
newly discovered architecture.

The next stage of our study is to pursue machine learning, using the
new RNN architecture. We considered two methods of machine learning:
\begin{enumerate}
\item simulated annealing;
\item Stochastic Gradient Descent (SGD). 
\end{enumerate}

In particular, we developed a training algorithm for the new
architecture, by minimizing a standard cost function (also called the
\emph{loss function} in the machine learning community) with simulated
annealing.  The training algorithm was demonstrated to find a set of
weights and biases of the neural network which yields a decoder
solving the decoding problem for the W-language. In some runs, the
decoder is logically equivalent to the manually constructed decoder.
Thus, we proved that our architecture can be trained to write programs
functionally equivalent to hand-coded programs written by a human. It
is possible to learn a decoding algorithm from a single sample of
length $30$ (encoding the string 'XOOXXO').

We also applied a different method of training the deductron called
\textbf{back-propagation through time} (BPTT) and known to succeed in
training other RNNs. This, and other back-propagation based algorithms
require computing gradients of complicated functions, necessitating
application of the Chain Rule over complex dependency graphs.  Modern
tools perform the gradient calculation automatically. One such tool is
Tensorflow \cite{tensorflow2015-whitepaper}.  We implemented machine
learning using Tensorflow and some programming in Python.  We took
advantage of the SGD implementation in Tensorflow.  In particular, we
used the \emph{Adam optimizer} \cite{2014arXiv1412.6980K}.  Using
standard steps, we demonstrated that the decoder for the W-language
can be constructed by learning from a small sample of valid sequences
(of length $\approx 500$).

Both simulated annealing and BPTT methods worked with relative ease
when applied to our problem of decoding the W-language.

\section{The W-language  and writing system}
In the current paper we study a toy example of a system for sending
messages like:
\begin{verbatim}
...XOOXXO...
\end{verbatim}
The message is thus expressed as a string in alphabet consisting of
letters 'X' and 'O'. However, we assume that the message is
transcribed by a human or a human-like system, by writing it on paper,
and scanning it to a digital image, e.g.  like in
Figure~\ref{fig:sample1}.

\begin{figure}[htb]
  \begin{center}
    \includegraphics[width=5in]{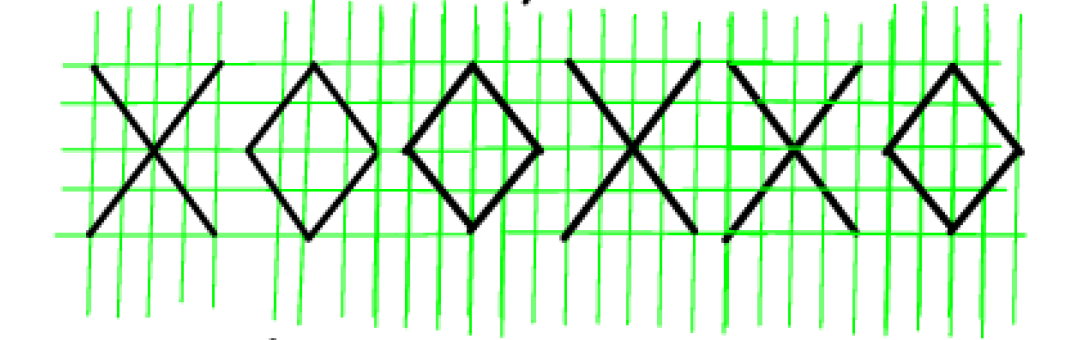}
  \end{center}
  \caption{Handwritten sample representing string
    'XOOXXO'.\label{fig:sample1}}
\end{figure}
Letters 'X' and 'O' were chosen because they are symmetric with
respect to reflections along the horizontal axis. We assume that the
receiver of the message sees only the upper half of the message, which
could look like Figure~\ref{fig:sample2}. Thus, our effective alphabet
is
\[ \mathcal{A} = \left\{\vee, \wedge\right\}. \]
However, when rendering the messages in this alphabet, we \textbf{may}
connect the consecutive characters, as in various script-based
languages, i.e. we write in \emph{cursive}.
\begin{figure}[htb]
  \begin{center}
    \includegraphics[width=5in]{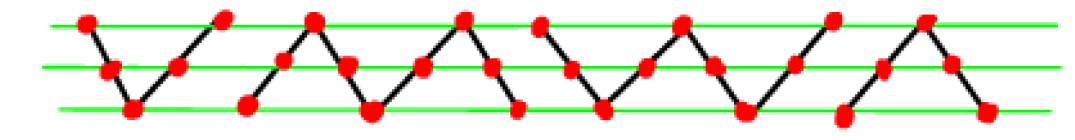}
  \end{center}
  \caption{The top portion of the handwritten sample representing
    string 'XOOXXO'.\label{fig:sample2}}
\end{figure}
The message is also subject to errors of various kinds, resulting in
something like Figure~\ref{fig:sample3}. More severe errors could be,
for instance, random bit flips, i.e. the input message could be
subjected to the \emph{binary symmetric channel}
\cite{citeulike:141092}.

\begin{figure}[htb]
  \begin{center}
    \includegraphics[width=5in]{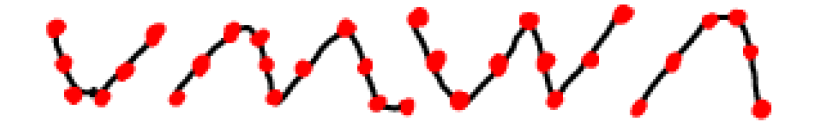}
  \end{center}
  \caption{The top portion of the handwritten sample representing
    string 'XOOXXO', with some errors.\label{fig:sample3}}
\end{figure}

For the purpose of constructing a minimalistic example still possessing the
features of the motivating example, we think of digitized representations
of the messages, which are 5 pixels tall. Thus the ``top'' of the message
is only 3 pixels tall, and it consists of a sequence of vectors
representing the columns of the image. Let
$0=(0,0,0)$, $e_1=(1,0,0)$, $e_2=(0,1,0)$ nd $e_3=(0,0,1)$ be the
vectors which may occur if we are precisely observing the rules
of calligraphy of our messages, as illustrated by Figure~\ref{fig:sample1}.
Our sample message 'XOOXXO' is thus represented by the sequence
of vectors:
\[  0, e_3, e_2, e_1, e_2, e_3, e_1, e_2, e_3, e_2, e_1, e_2, e_3, e_2,e_1,
  e_3, e_2, e_1, e_2, e_3, e_2, e_1, e_2, e_1, e_1, e_2, e_3, e_1, e_2, e_3, e_2, e_1. \]
We could consider ``errors'' obtained by inserting extra $0$ vectors between $e_1$ and $e_3$
signaling a long break between symbols 'X' and 'O'. We could repeat some vectors.
Generally, the image should consist of a number of ``waves'' and ``breaks''.

We could also represent the image as a matrix of bits, as in
Figure~\ref{fig:binary-image}.
\def\pixels{
  {0,1,0,0,0,1,0,0,1,0,0,0,1,0,0,1,0,0,0,1,0,0,0,1,0,0,1,0,0,0},
  {0,0,1,0,1,0,0,1,0,1,0,1,0,1,0,0,1,0,1,0,1,0,1,0,0,1,0,1,0,0},
  {0,0,0,1,0,0,1,0,0,0,1,0,0,0,1,0,0,1,0,0,0,1,0,0,1,0,0,0,1,0}%
}
As image ($30\times 3$):

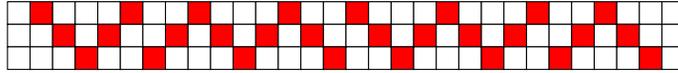
\begin{figure}
  \definecolor{pixel 0}{HTML}{FFFFFF}
  \definecolor{pixel 1}{HTML}{FF0000}
  \begin{center}
    \begin{tikzpicture}[scale=0.3]
      \foreach \line [count=\y] in \pixels {
        \foreach \pix [count=\x] in \line {
          \draw[fill=pixel \pix] (\x,-\y) rectangle +(1,1);
        }
      }
    \end{tikzpicture}
  \end{center}
  \caption{Binary image of the message 'XOOXXO'.\label{fig:binary-image}}
\end{figure}
In Figure~\ref{fig:raw-bits} we represent the image as raw data (a 2D matrix of bits).
\begin{figure}
  \begin{center}
    \begin{tikzpicture}[scale=0.3]
      \foreach \line [count=\y] in \pixels {
        \foreach \pix [count=\x] in \line {
          \node at (\x,-\y) {$\pix$};
        }
      }
    \end{tikzpicture}
  \end{center}
  \caption{Binary image of the message 'XOOXXO' as raw bits.\label{fig:raw-bits}}
\end{figure}
We note that the ``wave'' portion of the pattern may be arbitrarily
long.  However, a picture like Figure~\ref{fig:sample4} cannot be
interpreted as a long sequence '...XXXX...' or '...OOOO...'. We must
go back to the last ``break'' (one of the transitions $0\to e_1$,
$0\to e_3$, $e_2\to e_1$, $e_2\to e_3$ which begins a run of 'O' or 'X'.
\begin{figure}[htb]
  \begin{center}
    \includegraphics[width=3in]{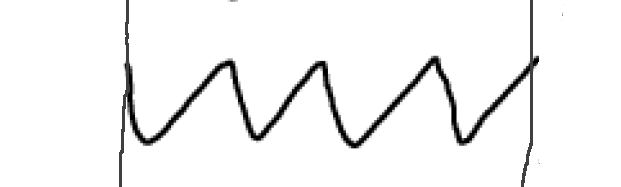}
  \end{center}
  \caption{A wave.\label{fig:sample4}}
\end{figure}
Thus decoding an image like Figure~\ref{fig:sample2} is very similar
to decoding a sequence encoded using Run Length Encoding (RLE), in
which we code runs of characters 'X' and 'O'. The transition tells us
whether we are starting an 'X' ($*\to e_3$, where $*$ denotes $0$ or
$e_2$) or 'O' ($*\to e_1$).

The image consists of a certain number of complete waves possibly
separated with breaks.  A properly constructed complete wave begins
and ends in the same vector, either $e_1$ or $e_3$. It can be divided
into rising and falling spans.  For example, a rising span would be a
sequence $e_1,e_2,e_2,e_3,e_3$.  That is, the non-zero coordinate of
the vector moves upwards.  A break is simply a run of $0$
vectors. Such a run must be preceeded and followed by $e_1$ or $e_3$.
Since the rising and falling spans are of arbitrary length, we must
remember whether we are rising or falling, to validate the sequence,
and to prevent spans like $e_3,e_2,e_2,\ldots,e_2,e_3$ which should
not occur in a valid sequence.  A complete wave starting with $e_3$
must begin with a falling span, and alternate rising and falling spans
afterwards, finally terminating with a rising span. In order to
decode a wave correctly as a sequence of 'X' or 'O', we must remember
whether we are currently rising or falling.

In short, we have to remember two things:
\begin{enumerate}
\item Are we within 'X' or 'O'?
\item Are we rising or falling?
\end{enumerate}
There is some freedom in choosing the moment when to emit a character
'X' or 'O'.  We could do it as soon as we begin a rising or falling
span terminating in the vector which started the wave. Or we can wait
for completion of the span, e.g., when a rising span ends and a
falling span begins, or has a jump $e_3\to e_1$ or $e_3\to 0$ (jump
$e_3\to e_2$ would be an error).

There is a simple graphical model (a topological Markov chain) which
generates all error-free sequences which can be decoded, in
Figure~\ref{fig:topological-markov-chain}. As we can see, the states
of the Markov chain correspond to the vectors $0$, $e_1$, $e_2$ and
$e_3$, except that vector $e_2$ has two corresponding states:
$e_2^\pm$.  The state $e_2^+$ ($e_2^-)$ can only be entered when we
encounter vector $e_2$ on a rising (falling) span. Thus, the state
$e_2^\pm$ is a state that ``remembers'' whether it is on a rising or
falling span. The total number of states is thus $5$.

In computer science and computer engineering the more common term is
\emph{finite state machine} (FSM) or \emph{finite state
  automaton}. This is essentially a Topological Markov Chain with
distinguished initial and final states.  Our Topological Markov Chain
generates complete expressions of the W-language iff they start at
$0$, $e_1$ or $e_3$. Thus initial and final states are these three
states.

\begin{figure}[htb]
  \begin{center}
    \begin{tikzpicture}[node distance=3cm,
      every edge/.style={thick,->,draw},
      every node/.style={circle,draw,thick}
      ]
      \node[circle](e1){$e_1$};
      \node(zero)[circle,below of = e1] {$0$};
      \node(e2p)[circle,left of = zero] {$e_2^+$};
      \node(e2m)[circle,right of = zero]{$e_2^-$};
      \node(e3)[circle,below of = zero]{$e_3$};
      \path (e1) edge[bend right,->]  (zero);
      \path (e1) edge[->] (e2p);
      \path (e1) edge[bend right,->] (e3);
      \path (e3) edge[bend right,->] (e1);        
      \path (e2p) edge[->] (e3);
      \path (zero) edge[->] (e1);
      \path (zero) edge[->] (e3);
      \path (e3) edge[bend right,->] (zero);
      \path (e3) edge[->] (e2m);
      \path (e2m) edge[->] (e1);            
      \path (e1) edge[loop above,->] (e1);
      \path (e3) edge[loop below,->] (e3);
      \path (zero) edge[loop right,->] (zero);
      \path (e2p) edge[loop left,->] (e2p);
      \path (e2m) edge[loop right,->] (e2m);                

    \end{tikzpicture}
  \end{center}
  \caption{The topological Markov chain which can be used to generate
    training data for our network. A valid transition sequence should
    start and end on one of the nodes: $e_1$, $e_3$ or $0$.
    Thus, it cannot start or end at $e_2^{\pm}$.
    \label{fig:topological-markov-chain}}
\end{figure}
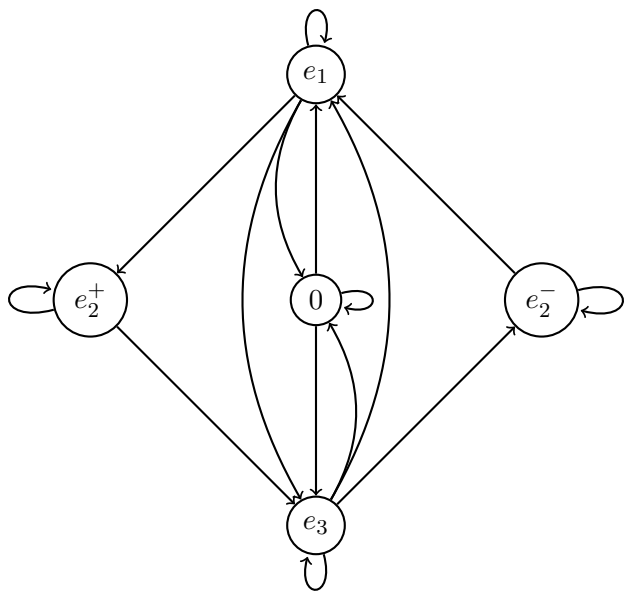
\begin{exercise}[Regular W-language generation]
  Draw a diagram, analogous to
  Figure~\ref{fig:topological-markov-chain} which describes only those
  sequences in which the rising and falling spans never stall, thus no
  frame repeats. We can call the resulting language a \emph{strict}
  W-language.  Assume that there are no breaks between symbols,
  i.e. connecting two consecutive 'X' or 'O' is mandatory.
\end{exercise}
\begin{exercise}[Higher resolution W-languages]
  Our W-language uses vertical resolution of 3 pixels. Define
  language $W_k$ in which symbols are $k$ pixels high. Consider
  the strict variant, also.
\end{exercise}
\section{A conventional W-language decoding algorithm}
Our next goal is to devise a simple algorithm which will correctly
decode the sequences encoded in the W-language. We emphasise that the
algorithm is ``conventional'' rather than ``connectionist'', although
the lines between these two approaches to programming will be
(deliberately) blurred in the following sections.

In order to correctly decode an image like in Figure~\ref{fig:binary-image}
processing it sequentially, by column, from left to right, we need to
detect and memorize the events associated with starting a new character.
The detection is possible by looking at a ``sliding window'' of 2 consecutive
column vectors.
\begin{center}
  \begin{tabular}{|c|c|p{2in}|}
    \hline
    First Column & Second Column & Event \\
    \hline
    $0$ or $e_1$ & $e_3$ &  Beginning of 'X'. \\
    $0$ or $e_3$ & $e_1$ &  Beginning of 'O'. \\
    \hline
  \end{tabular}
\end{center}
Let
\[ X=\begin{bmatrix}
    x_{1,1} & x_{1,2}\\
    x_{2,1} & x_{2,2}\\   
    x_{3,1} & x_{3,2}\\
  \end{bmatrix}
\]
be the sliding window. The beginning of 'X' is thus detected
by the logic statement:
\[ x_{2,1}=0 \wedge x_{3,1}=0 \wedge x_{3,2}=1 \]
Similarly, the beginning of 'O' is detected by  the logic statement:
\[ x_{2,1}=0 \wedge x_{1,1}=0 \wedge x_{1,2}=1 \]
These conditions can be expressed using auxillary variables:
\begin{align*}
  y_1&=\neg x_{2,1}\wedge \neg x_{3,1}\wedge x_{3,2}\\
  y_2&=\neg x_{2,1}\wedge \neg x_{1,1}\wedge x_{1,2}\\  
\end{align*}
The event can be recorded and memorized by setting variables $z_1$ and $z_2$
which indicate whether we are at the beginning of 'X' and 'O', respectively.
By convention, the meaning of the values of $z_1$ and $z_2$ is just given by:
\begin{center}
  \begin{tabular}{|c|c|}
    \hline
    Value of $z_1$ & Meaning\\
    \hline
    1 & Beginning of 'X'\\
    0 & Not beginning of 'X'\\    
    \hline
  \end{tabular}
  \begin{tabular}{|c|c|}
    \hline
    Value of $z_2$ & Meaning\\
    \hline
    1 & Beginning of 'O'\\
    0 & Not beginning of 'O'\\    
    \hline
  \end{tabular}
\end{center}
We also will use the vector $z=(z_1,z_2)$.
Knowing $z_1$ and $z_2$ allows us to emit 'X' or 'O' when we encounter the extreme values
$e_1$ and $e_3$. We observe that 'X' is emitted upon encountering a minimum in signal,
i.e. value $e_1$, while 'O' is emitted upon encountering a maximum, i.e. $e_3$.
The following table summarizes the actions which may result in emiting a symbol.
\begin{center}
  \begin{tabular}{|c|c|c|c|}
    \hline
    Conditions  & Value of $z_1$ & Value of $z_2$ & Action\\
    \hline
    $x_{1,2}=1$ & $0$ & $1$ & Emit 'X'\\
    $x_{3,2}=1$ & $1$ & $0$ & Emit 'O'\\
    All others & $*$ & $*$ & Nothing\\            
    \hline
  \end{tabular}
\end{center}
The action on every sliding window may result in setting the value of
$z_1$ or $z_2$ and/or emitting a symbol. Whether the symbol is emitted
or not will be signaled by setting a variable $emit_X$ or $emit_O$,
respectively.  In the algorithm, $emit_X$ and $emit_O$ are global
variables, their values persist outside the program. The program tells
the caller that an 'X' or 'O' was seen. The caller calls the program
on all frames (sliding windows) in succession, from left to right.

It is clear that an algorithm which correctly performs decoding should
look like Algorithm~\ref{alg:basic-decoding}. We divided the algorithm
into three sections, with horizontal lines.  These section nearly
exactly correspond to the three layers of the neural network
(deductron), which will be constructed from this program.
\begin{algorithm}[htb]
  \caption{A basic, handcrafted algorithm for decoding an image
    representing a sequence of 'X' and 'O'. \label{alg:basic-decoding}}
  \begin{algorithmic}[1]
    \Require
    \Statex The input parameter $X$ holds  the sliding window of the image with two consecutive columns.
    \Statex Global variables $z_1, z_2, emit_X, emit_O$. \Comment{Variables must persist outside this procedure.}
    \Ensure
    \Statex $emit_X$ is set to $1$ iff 'X' is detected in the input, else it is set to $0$;
    \Statex $emit_O$ is set to $1$ iff 'O' is detected in the input, else it is set to $0$.   
    \State $emit_X\gets 0$
    \State $emit_O\gets 0$
    \State $y_1=\neg x_{2,1}\wedge \neg x_{3,1}\wedge x_{3,2}$ \Comment{Set $y_1$ if start of 'X'.}
    \State $y_2=\neg x_{2,1}\wedge \neg x_{1,1}\wedge x_{1,2}$\Comment{Set $y_2$ if start of 'O'.} 
    \Divider
    \If{$y_1$}
    \State $z_1\gets 1$ \Comment{Remember we are in 'X'.}
    \State $z_2\gets 0$  \Comment{And remember we are not in 'O'.}  
    \ElsIf{$y_2$}
    \State $z_1\gets 0$ \Comment{Remember we are not in 'X'.}
    \State $z_2\gets 1$ \Comment{And remember we are in 'O'.}   
    \EndIf
    \Divider
    \State $emit_X\gets  x_{1,2}\wedge z_1$
    \State $emit_O \gets x_{3,2}\wedge z_2$    
  \end{algorithmic}
\end{algorithm}
Although we designed our algorithm to use a sliding window, this is
not necessary. (Hint: You can buffer your data from within your
algorithm, using persistent, i.e. global variables).

\begin{exercise}[Elimination of sliding window]
  Design an algorithm similar to
  Algorithm~\ref{alg:basic-decoding} which takes a single column
  (frame) of the image as input.
\end{exercise}  
\begin{exercise}[Pixel at a time]
  Design a similar algorithm to Algorithm~\ref{alg:basic-decoding}
  which takes a single pixel as input, assuming \emph{vertical
    progressive scan}: pixels are read from bottom-to-top, and then
  left-to-right.
\end{exercise}
\begin{exercise}[Counting algorithms]
  Count the number of distinct algorithms similar to
  Algorithm~\ref{alg:basic-decoding}. That is, count the
  algorithms which:
  \begin{enumerate}
  \item operate on a sliding window with $6$ pixels;
  \item use two 1-bit memory cells;
  \item produce two 1-bit outputs.
  \end{enumerate}
  Clearly, one of them is our algorithm.
\end{exercise}
\begin{exercise}[The precise topological Markov chain]
  Note that the transition graph in Figure~\ref{fig:topological-markov-chain}
  allows for generation of partial characters 'X' and 'O'. For example,
  the sequence:
  \[ 0, e_1, e_2^+, e_3, 0 \] would result in emitting an 'O' by our
  program, but the 'O' would never be completed. Prove that the
  transition diagram in Figure~\ref{fig:topological-markov-chain-mod}
  enforces completion of characters. In fact, prove that this
  topological Markov chain is 100\% compatible with
  Algorithm~\ref{alg:basic-decoding}. What is the role of superscripts ``f'' and ``s''?.
  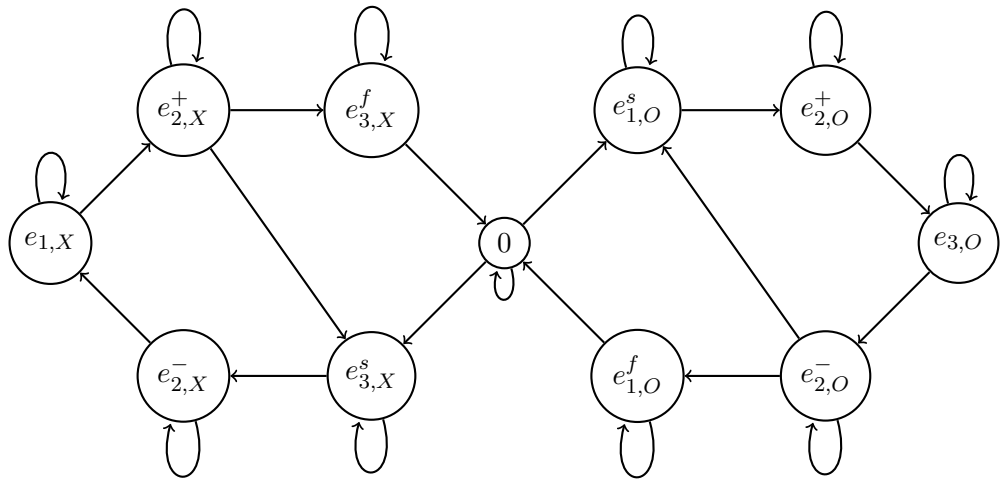
\begin{figure}[htb]
    \begin{center}
      \begin{tikzpicture}[scale=0.8, node distance=2.5cm,
        every edge/.style={thick,->,draw},
        every node/.style={circle,draw,thick}
        ]
        \node[circle](e1X){$e_{1,X}$};
        \node(e2pX)[circle, above right of = e1X] {$e_{2,X}^+$};
        \node(e2mX)[circle, below right of = e1X]{$e_{2,X}^-$};
        \node(e3X)[circle, right of = e2mX]{$e_{3,X}^s$};
        \node(e3XF)[circle, right of = e2pX] {$e_{3,X}^f$};
        \path (e1X) edge[->] (e2pX);
        \path (e2pX) edge[->] (e3X);
        \path (e3X) edge[->] (e2mX);
        \path (e2mX) edge[->] (e1X);            
        \path (e2pX) edge[->] (e3XF);

        \path (e1X) edge[loop above,->] (e1X);
        \path (e3X) edge[loop below,->] (e3X);
        \path (e3XF) edge[loop above,->] (e3XF);
        \path (e2pX) edge[loop above,->] (e2pX);
        \path (e2mX) edge[loop below,->] (e2mX);

        \node(zero)[circle, above right of = e3X] {$0$};
        \node(e1O)[circle, above right of = zero]{$e_{1,O}^s$};
        \node(e2pO)[circle, right of = e1O] {$e_{2,O}^+$};
        \node(e3O)[circle,  below right of = e2pO]{$e_{3,O}$};
        \node(e1OF)[circle,  below right of = zero]{$e_{1,O}^f$};
        \node(e2mO)[circle, right of = e1OF]{$e_{2,O}^-$};
        \path (e1O) edge[->] (e2pO);
        \path (e2pO) edge[->] (e3O);
        \path (e3O) edge[->] (e2mO);
        \path (e2mO) edge[->] (e1O);
        \path (e2mO) edge[->] (e1OF);

        \path (e1O) edge[loop above,->] (e1O);
        \path (e3O) edge[loop above,->] (e3O);
        \path (e2pO) edge[loop above,->] (e2pO);
        \path (e1OF) edge[loop below,->] (e1OF);
        \path (e2mO) edge[loop below,->] (e2mO);
        \path (zero) edge[loop below,->] (zero);
        \path(zero) edge[->] (e3X);
        \path(zero) edge[->] (e1O);
        \path(e3XF) edge[->] (zero);
        \path(e1OF) edge[->] (zero);

      \end{tikzpicture}
    \end{center}
    \caption{An improved topological Markov chain. The idea is to have
      essentially two copies of the diagram in
      Figure~\ref{fig:topological-markov-chain}, the left part for 'X'
      and the right part for 'O'. Also, some states are split to enforce
      completion of characters (the states with superscript ``f'' are  ``final''
      in generating each character).
      \label{fig:topological-markov-chain-mod}}
  \end{figure}

\end{exercise}

\begin{exercise}[Deductron and Chaotic Dynamics]
  In this exercise we develop what can be considered a custom
  pseudorandom number generator, which generates valid sequences in
  the W-language.  It mimics the operation of a \emph{linear
    congruential random number generator}
  (e.g. \cite{Knuth:1997:ACP:270146}, Chapter 3).  This exercise
  requires some familiarity with Dynamical Systems, for example, in
  the scope of Chapter 6 of \cite{alligood2000chaos}.
  Figure~\ref{fig:chaotic-mapping} we have an example of a simple
  chaotic dynamical system: a piecewise linear mapping of an interval
  $f:[0,\NumStates)\to[0,\NumStates)$. This mapping is
  \textbf{piecewise expanding}, i.e. $|f'(x)|>1$ except for the
  discontinuities.  In fact, $f'(x)\in\{2,4\}$. The intervals
  $[k,k+1)$, $k=0,1,\ldots,8$ are in 1:1 correspondence with the
  states of the Markov chain in
  Figure~\ref{fig:topological-markov-chain-mod}. This allows us to
  generate valid expressions of the W-language by using the dynamics
  of $f$.  We simply choose a random initial condition $x_0\in [0,9)$
  and create a \emph{trajectory} by successive applications of $f$:
  \[ x_{n+1} = f(x_{n}). \] Let $k_n$ be a sequence of numbers such
  that $x_n \in [k_n,k_n+1)$ for $n=0,1,\ldots$. Let $s_{n}$ be the
  corresponding sequence of states labeling the intervals, in the set
  \[\{
    e_{3,X}^s,e_{2,X}^-,e_{1,X},e_{2,X}^{+},e_{1,X},,e_{3,X}^f,
    ,\mathbf{0},
    e_{1,O}^s,e_{2,O}^+,e_{3,O},e_{2,X}^{-},e_{1,X}^f\}.\]
  The idea is the second subscript ('X' or 'O') keeps track of which
  symbol we are in the middle of. The superscript $\pm$ on
  $e_{2,X}^\pm$ and $e_{2,O}^\pm$ keeps track of whether we are rising
  or falling, as before. The superscripts 's' (for 'start') and 'f' (for 'finish')
  indicate whether we are starting or finishing the corresponding character.
  Thus, $e_{3,X}^s$ means we are starting an 'X', and $e_{3,X}^f$ means we are
  finishing an 'X'. The difference is that a finished character must be followed
  by a blank or the other character, and must not continue the same character.
  
  Prove that $s_0,s_1,\ldots,s_n,\ldots$ is a valid sentence the
  W-language. Conversely, show that for every such sentence there is
  an initial condition $x_0\in[0,\NumStates)$ reproducing this
  sentence. Moreover, for infinite sentences $x_0$ is unique.
  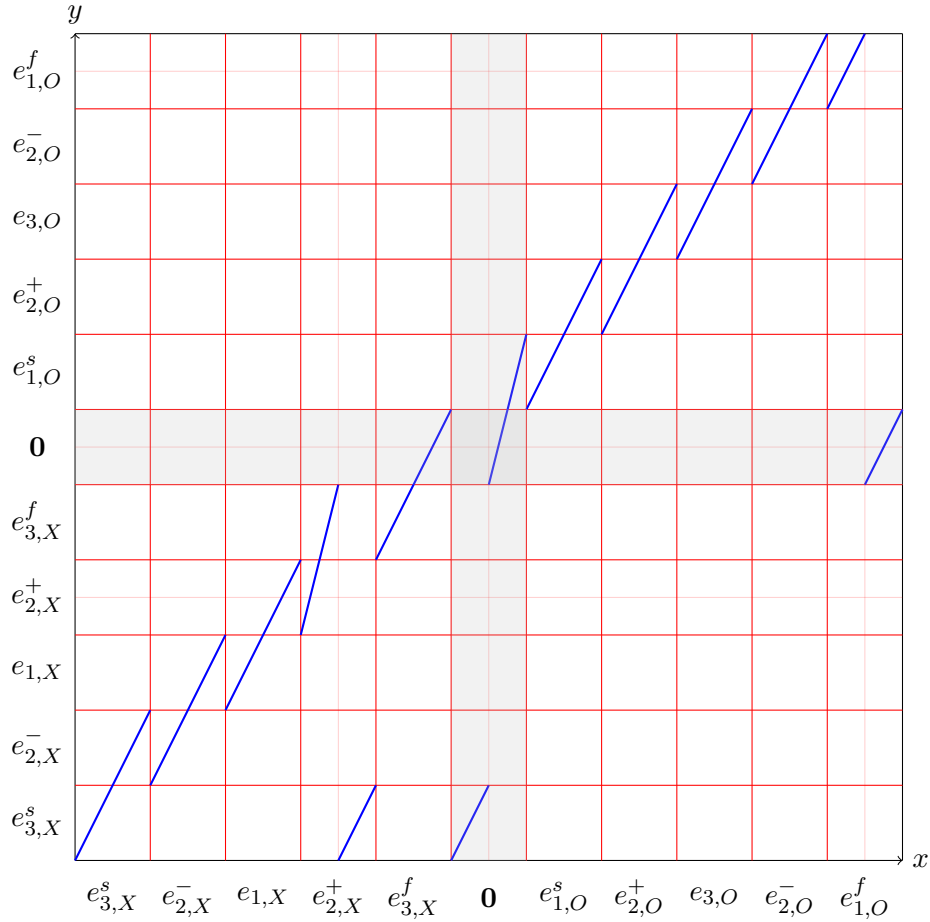
\begin{figure}
    \begin{tikzpicture}
      \pgfmathparse{\NumStates+1};
      \def\NumStatesPlusOne{\pgfmathresult};
      \draw[->] (0,0) -- (\NumStates,0) node[right] {$x$};
      \draw[->] (0,0) -- (0,\NumStates) node[above] {$y$};
      \draw[-] (\NumStates,0) -- (\NumStates,\NumStates)  node[right] {};
      \draw[-] (0,\NumStates) -- (\NumStates,\NumStates) node[above] {};
      \pgfmathparse{\NumStates-1};
      \def\NumStatesLessOne{\pgfmathresult};
      \foreach \x in {1,...,\NumStatesLessOne} {
        \draw[-,red] (\x,0) -- (\x,\NumStates);
        \draw[-,red] (0,\x) -- (\NumStates,\x);
      }
      \node at (0.5,-0.5){$e_{3,X}^s$};
      \node at (1.5,-0.5){$e_{2,X}^{-}$};
      \node at (2.5,-0.5){$e_{1,X}$};
      \node at (3.5,-0.5){$e_{2,X}^{+}$};
      \node at (4.5,-0.5){$e_{3,X}^f$};    
      \node at (-0.5,0.5){$e_{3,X}^s$};
      \node at (-0.5,1.5){$e_{2,X}^{-}$};
      \node at (-0.5,2.5){$e_{1,X}$};
      \node at (-0.5,3.5){$e_{2,X}^{+}$};
      \node at (-0.5,4.5){$e_{3,X}^f$};    
      \node at (6.5,-0.5){$e_{1,O}^s$};
      \node at (7.5,-0.5){$e_{2,O}^{+}$};
      \node at (8.5,-0.5){$e_{3,O}$};
      \node at (9.5,-0.5){$e_{2,O}^{-}$};
      \node at (10.5,-0.5){$e_{1,O}^f$};    
      \node at (-0.5,6.5){$e_{1,O}^s$};
      \node at (-0.5,7.5){$e_{2,O}^{+}$};
      \node at (-0.5,8.5){$e_{3,O}$};
      \node at (-0.5,9.5){$e_{2,O}^{-}$};
      \node at (-0.5,10.5){$e_{1,O}^f$};    
      \node at (5.5,-0.5) {$\mathbf{0}$};
      \node at (-0.5,5.5) {$\mathbf{0}$};

      \draw[domain=0:1,smooth,variable=\x,blue,thick] plot ({\x},{2*\x});
      \draw[domain=1:2,smooth,variable=\x,blue,thick] plot ({\x},{1+2*(\x-1)});
      \draw[domain=2:3,smooth,variable=\x,blue,thick] plot ({\x},{2+2*(\x-2)});
      \draw[domain=3:3.5,smooth,variable=\x,blue,thick] plot ({\x},{3+4*(\x-3)});
      \draw[-,red,very thin,opacity=0.2] (3.5,0)--(3.5, \NumStates);
      \draw[-,red,very thin,opacity=0.2] (0,3.5)--(\NumStates,3.5);
      \draw[domain=3.5:4,smooth,variable=\x,blue,thick] plot ({\x},{2*(\x-3.5)});
      \draw[domain=4:5,smooth,variable=\x,blue,thick] plot ({\x},{4+2*(\x-4)});
      \draw[domain=5:5.5,smooth,variable=\x,blue,thick] plot ({\x},{2*(\x-5)});
      \draw[-,red,very thin,opacity=0.2] (5.5,0)--(5.5, \NumStates);
      \draw[-,red,very thin,opacity=0.2] (0,5.5)--(\NumStates,5.5);
      \draw[domain=5.5:6,smooth,variable=\x,blue,thick] plot ({\x},{5+4*(\x-5.5)});

      \draw[domain=6:7,smooth,variable=\x,blue,thick] plot ({\x},{6+2*(\x-6)});
      \draw[domain=7:8,smooth,variable=\x,blue,thick] plot ({\x},{7+2*(\x-7)});
      \draw[domain=8:9,smooth,variable=\x,blue,thick] plot ({\x},{8+2*(\x-8)});
      \draw[domain=9:10,smooth,variable=\x,blue,thick] plot ({\x},{9+2*(\x-9)});
      \draw[domain=10:10.5,smooth,variable=\x,blue,thick] plot ({\x},{10+2*(\x-10)});
      \draw[domain=10.5:11,smooth,variable=\x,blue,thick] plot ({\x},{5+2*(\x-10.5)});
      \draw[-,red,very thin,opacity=0.2,opacity=0.2] (10.5,0)--(10.5, \NumStates);
      \draw[-,red,very thin,opacity=0.2] (0,10.5)--(\NumStates,10.5);

      \path[fill=gray!50,fill opacity=0.2] (5,0) -- (6,0) -- (6,\NumStates) -- (5,\NumStates) -- cycle;
      \path[fill=gray!50,fill opacity=0.2] (0,5) -- (0,6) -- (\NumStates,6) -- (\NumStates,5) -- cycle;
    \end{tikzpicture}
    \caption{A piecewise linear mapping $f:[0,\NumStates)\to[0,\NumStates)$, in which every piece has slope $2$ or $4$. The intervals $[k,k+1)$ are
      labeled with the states of the topological Markov chain in
      Figure~\ref{fig:topological-markov-chain-mod}.\label{fig:chaotic-mapping}}
  \end{figure}
\end{exercise}

\section{Converting a conventional program to a neural net}
Our ultimate goal is to construct a neural network which will decode
the class of valid inputs. A neural network does not evaluate logical
expressions and has no control structure of conventional
programs. Instead, it performs certain arithmetical calculations and
it outputs results based on hard or soft threshholding.

The next step towards a neural network consists in rewriting our
program so that it uses arithmetic instead of logic, and has no
control structures, such as ``if'' statements. We replace logical
variables with real variables, but initially \textbf{we restrict their
  values to $0$ and $1$ only}. It is important that the logical
operations (``and'', ``or'' and negation) are performed as arithmetic
on real values. 

The conditions in Algorithm~\ref{alg:basic-decoding} can be expressed
arithmetically (as every \emph{prepositional calculus formula}
can). We introduce the variables:
\begin{align*}
  y_1&=S(x_{2,1}+x_{3,1}+(1-x_{3,2}))\\
  y_2&=S(x_{2,1}+x_{1,1}+(1-x_{1,2}))\\  
\end{align*}
where $S$ is a function on integers defined by
\begin{equation}
  S(a) = \begin{cases}
    1 & a\leq 0,\\
    0 & a\geq 1.
  \end{cases}
\end{equation}
$S$ plays the role of an \emph{activation function}, in the language
of neural computing.  Variables $y_1$ and $y_2$ are conceptually
related to \emph{perceptrons}, or, in language closer to statistics,
they are \emph{binary linear classifiers}.  We note that this function
allows an easy test of whether a number of variables are
$0$. Variables $u_1,u_2,\ldots,u_r$ with values in the set $\{0,1\}$
are all zero iff
\[ S\left(\sum_{j=1}^r u_j\right) = 1 \]
We obtain Algorithm~\ref{alg:basic-decoding-arithmetical}.
\begin{algorithm}[htb]
  \caption{A version of the basic, handcrafted algorithm for decoding
    an image representing a sequence of 'X' and 'O' in which logical
    operations were replaced with
    arithmetic.\label{alg:basic-decoding-arithmetical}}
  \begin{algorithmic}[1]
    \Require
    \Statex The input parameter $X$ holds  the sliding window of the image with two consecutive columns.
    \Statex Global variables $z_1, z_2, emit_X, emit_O$. \Comment{Variables must persist outside this procedure.}
    \Ensure
    \Statex $emit_X$ is set to $1$ iff 'X' is detected in the input, else it is set to $0$;
    \Statex $emit_O$ is set to $1$ iff 'O' is detected in the input, else it is set to $0$.   
    \State $emit_X\gets 0$ \Comment{Initialize to $0$ (no emitting).}
    \State $emit_O\gets 0$ \Comment{Initialize to $0$ (no emitting).}
    \State $y_1=S(x_{2,1}+x_{3,1}+(1-x_{3,2}))$ \Comment{Set $y_1$ if start of 'X'.} 
    \State $y_2=S(x_{2,1}+x_{1,1}+(1-x_{1,2}))$ \Comment{Set $y_2$ if start of 'O'.} 
    \Divider
    \If{$y_1$}
    \State $z_1\gets 1$ \Comment{Remember we are in 'X'.}
    \State $z_2\gets 0$ \Comment{And remember we are Not in 'O'.}
    \ElsIf{$y_2$}
    \State $z_1\gets 0$ \Comment{Remember we are not in 'X'.}
    \State $z_2\gets 1$ \Comment{And remember we are in 'O'.}
    \EndIf
    \Divider
    \State $emit_X \gets S((1-x_{1,2})+(1-z_1))$
    \State $emit_O \gets S((1-x_{3,2})+(1-z_2))$    
  \end{algorithmic}
\end{algorithm}
The final adjustment to the algorithm is made in
Algorithm~\ref{alg:basic-decoding-arithmetical} in which \textbf{we replace
  all conditionals with arithmetic}.
This results in Algorithm~\ref{alg:basic-decoding-arithmetical-final}.
\begin{algorithm}[htb]
  \caption{A version of the basic, handcrafted algorithm for decoding
    an image representing a sequence of 'X' and 'O' in which all
    conditionals were converted to arithmetic. We can view this code
    as an algorithm calculating activations and outputs of a neural
    network with several types of
    neurons.\label{alg:basic-decoding-arithmetical-final} }
  \begin{algorithmic}[1]
    \Require
    \Statex The input parameter $X$ holds  the sliding window of the image with two consecutive columns.
    \Statex Global variables $z_1, z_2, emit_X, emit_O$. \Comment{Variables must persist outside this procedure.}
    \Ensure
    \Statex $emit_X$ is set to $1$ iff 'X' is detected in the input, else it is set to $0$;
    \Statex $emit_O$ is set to $1$ iff 'O' is detected in the input, else it is set to $0$.   
    \State $y_1 \gets S(x_{2,1}+x_{3,1}+(1-x_{3,2}))$ \Comment{Set $y_1$ if start of 'X'.} 
    \State $y_2 \gets S(x_{2,1}+x_{1,1}+(1-x_{1,2}))$ \Comment{Set $y_2$ if start of 'O'.} 
    \Divider
    \State $z_1 \gets (1-y_1)z_1 + y_1$ \Comment{If start of 'X', remember we are in 'X'.}
    \State $z_2 \gets (1-y_1)z_2$ \Comment{If start of 'X', remember we are not in 'O'.}
    \State $z_2 \gets (1-y_2)z_2 + y_2$ \Comment{If start of 'O', remember we are in 'O'.}
    \State $z_1 \gets (1-y_2)z_1$ \Comment{If start of 'O', remember we are not in 'X'.}
    \Divider
    \State $emit_X \gets S((1-x_{1,2})+(1-z_1))$ \Comment{Minimum and in 'X'.}
    \State $emit_O \gets S((1-x_{3,2})+(1-z_2)))$ \Comment{Maximum and in 'O'.}   
  \end{algorithmic}
\end{algorithm}
Upon close inspection, we can regard the algorithm as
an implementation of a neural network with several types of
neurons (gates).
\begin{enumerate}
\item Perceptron-type, with formula
  \[y_i=S\left(\sum_{j} w_{ij}x_j+b_i\right)\]
\item ``Forget and replace'' gate:
  \[ z'=U(y,z,z')\]
  where
  \[ U(y,z,z') =\begin{cases}
      z' & y=1,\\
      z  & y=0.
    \end{cases}
  \]
  Or arithmetically,
  \[
    U(y,z,z')=(1-y)\cdot z + y\cdot z'.
  \]
  This kind of gate provides a basic memory mechanism, where
  $z$ is preserved if $y=0$, or replaced with $z'$
  if $y=1$.
\end{enumerate}
Using the newly introduced U-gate we rewrite our main algorithm
as Algorithm~\ref{alg:basic-decoding-arithmetical-clean}.

\begin{algorithm}[htb]
  \caption{A version of the basic, handcrafted algorithm for decoding
    an image representing a sequence of 'X' and 'O'. Explicit gates
    are used to underscore the neural network
    format.\label{alg:basic-decoding-arithmetical-clean}}
  \begin{algorithmic}[1]
    \Require
    \Statex The input parameter $X$ holds  the sliding window of the image with two consecutive columns.
    \Statex Global variables $z_1, z_2, emit_X, emit_O$. \Comment{Variables must persist outside this procedure.}
    \Ensure
    \Statex $emit_X$ is set to $1$ iff 'X' is detected in the input, else it is set to $0$;
    \Statex $emit_O$ is set to $1$ iff 'O' is detected in the input, else it is set to $0$.   
    \State $y_1 \gets S(x_{2,1}+x_{3,1}+(1-x_{3,2}))$ \Comment{Set if start of 'X'.}
    \State $y_2 \gets S(x_{2,1}+x_{1,1}+(1-x_{1,2}))$ \Comment{Set if start of 'O'.}
    \Divider
    \State $z_1 \gets U(y_1,z_1,1)$ \Comment{If start of 'X', remember we are in 'X'.}
    \State $z_2 \gets U(y_1,z_2,0)$ \Comment{If start of 'X', remember we are not in 'O'.}    
    \State $z_2 \gets U(y_2,z_2,1)$ \Comment{If start of 'O', remember we are in 'O'.}
    \State $z_1 \gets U(y_2,z_1,0)$ \Comment{If start of 'O', remember we are not in 'X'.}    
    \Divider
    \State $emit_X \gets S((1-x_{1,2})+(1-z_1))$ \Comment{Minimum and in 'X'.}
    \State $emit_O \gets S((1-x_{3,2})+(1-z_2))$ \Comment{Maximum and in 'O'.}   
  \end{algorithmic}
\end{algorithm}

\section{An analysis of the U-gate and a new V-gate}
The U-gate implements in essence the \emph{modus ponens} inference
rule of prepositional logic:
\begin{equation*}
  p,\;{p \to q}\implies q
\end{equation*}
Indeed, $p$ represents the \emph{replace port} of a U-gate. The
assignment $q \gets U(p, q, 1)$ is equivalent to $p\to q$ in the
following sense:  the boolean variable $q$ represents a bit stored in
memory. If $p$ is true, $q$ is \emph{asserted}, i.e. set to true, so
that the logical expression $p\to q$ is true (has value $1$).
Similarly, the assigment $q \gets U(p,q,0)$ is equivalent to
$p\to \neg q$, i.e. $q$ is set to $0$, so that $p\to \neg q$ is
true. Thus, if $q$ is set to $1$, the fact $q$ is retracted, and the
fact $\neg q$ is asserted.  This semantics is similar to the semantics
of the Prolog system without variables, where we have a number of
\emph{facts}, such as ``$q$'' or ``$\neg q$'', in the Prolog database.
Upon execution, facts can be asserted or retracted from the database.

Thus, the inference layer consists of:
\begin{enumerate}
\item a number of variables $q_1, q_2,\ldots, q_n$ with some values of
  the variables set to either $0$ or $1$. Some of the variables may
  not be initialized, i.e. hold an undefined value;
\item a number of assignments $q_j \gets U(p_k, q_j, r_j)$ where
  $r_j=0$ or $r_j=1$, where the order of the assignments matters;
  the order may only be changed if the new order will always
  result in the same values for all variables after all assignments are
  processed; some assignments can be performed in parallel, if
  they operate on disjoint sets of variables $q_j$, so that
  the order of processing of the groups does not affect the result;
  the same variable $q_j$ may be updated many times by different U-gates.
  That is, later gates in the order may overwrite the result of the former
  gates.
\end{enumerate}
In the interaction between the variables $q_j$ and $r_j$, which are the
result of binary classification, and variables $p_j$, which represent
the memory of the system, it proves beneficial to assume that
$p_j$ is controlled by only two variables, and the final value
of $p_j$ after processing one input is represented by another kind
of gate, the $V$-gate, which combines the action of two $U$-gates.

The $V$-gate operates according to the formula:
\[ V(z, u, v) = (1-u)(1-v)z + u \]
where $z$ stands for a memory
variable (replacing $p$ in our naming convention used in the context
of the U-gate). A different (equivalent) formula for $V$-gate is:
\[
  V(z, u, v) = \begin{cases}
    z& \text{when $u=v=0$,}\\
    u& \text{when $v=1$.}
  \end{cases}
\]
Equivalently, in logic terms we have several wff's of propositional
calculus which represent $V$:
\begin{align*}
  V(z, u, v) &= u \vee (\neg v \wedge z) \\
             &= u \vee \neg ( v \vee \neg z)  \\
             & = (z \to v) \to u.
\end{align*}
The action of the variables $u$ and $v$ on $z$ is expressed
as the assignment:
\[ z \gets V(z, u, v). \]
We will adopt the following approach: every memorized variable $z$
will be controlled by exactly two variables: $u$ and $v$. The
rationale is that there are only two possible values of
$z$. Therefore, if multiple assignments are made to $z$, the final
result can be equivalently computed by combining those multiple
assignments. This is equivalent to performing conjunction of multiple
controlling variables $u$ and $v$. The conjunction can be done by
adding more variables to the first perceptron layer (adding together
activations is equivalent to the end operation). Hence, only one
$V$-gate is necessary to handle the change of the value of a memory
variable $z$.

The use of gate $V$ is illustrated by the following example:
\begin{example}[W-language decoding]
  In this example, we consider Algorithm~\ref{alg:basic-decoding-arithmetical-final}.
  Instead of using a $U$ gate, we can use the $V$-gate. Indeed, the assignments
  \begin{framed}
    \begin{algorithmic}
      \State $z_1 \gets (1-y_1)z_1 + y_1$ \Comment{If start of 'X', remember we are in 'X'.}
      \State $z_2 \gets (1-y_1)z_2$ \Comment{If start of 'X', remember we are not in 'O'.}
      \State $z_2 \gets (1-y_2)z_2 + y_2$ \Comment{If start of 'O', remember we are in 'O'.}
      \State $z_1 \gets (1-y_2)z_1$ \Comment{If start of 'O', remember we are not in 'X'.}
    \end{algorithmic}
  \end{framed}
  can be rewritten as:
  \begin{framed}
    \begin{algorithmic}
      \State $z_1 \gets (1-y_1)(1-y_2)z_1 + y_1$
      \State $z_2 \gets (1-y_1)(1-y_2)z_2 + y_2$
    \end{algorithmic}
  \end{framed}
  i.e.
  \begin{framed}
    \begin{algorithmic}
      \State $z_1 \gets V(z_1, y_1, y_2)$
      \State $z_2 \gets V(z_2, y_2, y_1)$
    \end{algorithmic}
  \end{framed}
  Hence, $y_1$ and $y_2$ are controlling both $z_1$ and $z_2$.
\end{example}
Algorithm~\ref{alg:three-layer-final} is a modification of the
previous algorithms which does not use the input values in the output
layer.  This is achieved by using the input layer (binary
classification of the inputs) to memorize some input values in the
memories (variables $z_j$). This technique demonstrates that the
output layer of a deductron performing only binary classification of
the memories (variables $z_j$) is sufficiently general without
explicitly utilizing input values.
\begin{algorithm}
  \caption{The final three-layer deductron architecture utilizing 4
    memory cells.\label{alg:three-layer-final}}
  \begin{algorithmic}[1]
    \Require
    \Statex The input parameter $X$ holds  the sliding window of the image with two consecutive columns.
    \Statex Global variables $z_1, z_2, z_3, z_4, emit_X, emit_O$. \Comment{Variables must persist outside this procedure.}
    \Ensure
    \Statex $emit_X$ is set to $1$ iff 'X' is detected in the input, else it is set to $0$;
    \Statex $emit_O$ is set to $1$ iff 'O' is detected in the input, else it is set to $0$.   

    \State $y_{1,1} \gets S(x_{2,1}+x_{3,1}+(1-x_{3,2}))$ \Comment{$=y_1$.}
    \State $y_{2,1} \gets S(x_{1,1}+x_{2,1}+(1-x_{1,2}))$ \Comment{$=y_2$.}
    \State $y_{3,1} \gets S(x_{1,1}+(1-x_{1,2}))$ \Comment{Set if $x_{1,2}\wedge\neg x_{1,1}$.}
    \State $y_{4,1} \gets S(x_{3,1}+(1-x_{3,2}))$ \Comment{Set if $x_{3,2}\wedge\neg x_{3,1}$.}

    \State $y_{1,2} \gets S(x_{1,1}+x_{2,1}+(1-x_{1,2}))$ \Comment{$=y_2$.}
    \State $y_{2,2} \gets S(x_{2,1}+x_{3,1}+(1-x_{3,2}))$ \Comment{$=y_1$.}
    \State $y_{3,2} \gets S(0)$ \Comment{$=1$}
    \State $y_{4,2} \gets S(0)$ \Comment{$=1$}           

    {\color{red}\hrule}
    \State $z_1 \gets V(z_1,y_{1,1},y_{1,2})$ 
    \State $z_2 \gets V(z_2,y_{2,1},y_{2,2})$ 
    \State $z_3 \gets V(z_3,y_{3,1},y_{3,2})$ \Comment{$z_3\gets y_{3,1}$.}
    \State $z_4 \gets V(z_4,y_{4,1},y_{4,2})$ \Comment{$z_4\gets y_{4,1}$.}       

    {\color{red}\hrule}
    \State $emit_X \gets S((1-z_3)+(1-z_1))$ 
    \State $emit_O \gets S((1-z_4)+(1-z_2))$ 
  \end{algorithmic}
\end{algorithm}

Let us finish this section with a mathematical result proven by our
approach:
\begin{theorem}[On deductron decoding of W-language]
  There exists a deductron with $4$ memory cells which correctly decodes
  every valid expression of the W-language.
\end{theorem}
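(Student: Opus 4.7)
The plan is constructive: exhibit the concrete deductron given in Algorithm~\ref{alg:three-layer-final} and verify that, when driven by any valid W-language sliding-window stream, its outputs $emit_X$ and $emit_O$ fire exactly on the columns that complete an 'X' or an 'O'. Since the statement is an existence claim, producing one working instance together with a correctness argument suffices.

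First I would set up the semantic invariants carried by the four memory cells. Informally, $z_1$ should equal $1$ between the start of an 'X' and its completion, $z_2$ between the start of an 'O' and its completion, while $z_3$ and $z_4$ buffer one-step lookback information about the top and bottom pixel rows, so that the output layer can detect a minimum (for 'X') or a maximum (for 'O') using only the memories, without re-reading the current input. Concretely I would define, for each time step $n$, the intended values $z_j^{\,*}(n)\in\{0,1\}$ as functions of the current state of the Markov chain in Figure~\ref{fig:topological-markov-chain-mod} and the preceding frame, and then prove by induction on $n$ that the actual values produced by the algorithm coincide with the intended ones.

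The inductive step is where the bulk of the work lies, and it is essentially a finite case analysis over the legal transitions of the Markov chain. Using the arithmetic identities established earlier, one has $y_{1,1}=1$ iff the sliding window matches the pattern $(0 \text{ or } e_1,e_3)$ and $y_{2,1}=1$ iff it matches $(0 \text{ or } e_3,e_1)$, with $y_{3,1},y_{4,1}$ detecting the corresponding top/bottom row transitions needed to recognize extrema. The $V$-gate semantics (as derived in the preceding section) then give $z_1 \gets V(z_1,y_{1,1},y_{1,2})$ the value $1$ when an 'X' starts, $0$ when an 'O' starts, and leaves $z_1$ unchanged otherwise; analogously for $z_2$, while $z_3,z_4$ reduce to $y_{3,1}$ and $y_{4,1}$. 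Combined with the output layer computing the logical AND $emit_X = x_{1,2}\wedge z_1$ (encoded as $z_3\wedge z_1$ via the memory buffer) and similarly for $emit_O$, one obtains a symbol output precisely at the extrema, which by the structure of the Markov chain occur exactly once per character.

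The main obstacle is bookkeeping: one must check, for every admissible pair of consecutive frames $(X_{\cdot,1},X_{\cdot,2})$ taken from the valid transitions of Figure~\ref{fig:topological-markov-chain-mod}, that the updates preserve the invariants and that no spurious emission occurs, in particular across the ``break'' states with zero columns and at the junction between characters. A clean way to manage this is to tabulate, for each of the finitely many transition types ($0\to 0$, $0\to e_1$, $0\to e_3$, $e_1\to e_2^+$, $e_2^+\to e_3$, $e_3\to e_2^-$, $e_2^-\to e_1$, $e_1\to e_1$, $e_3\to e_3$, and their counterparts ending a character), the resulting values of $y_{i,j}$, the post-update values of $z_1,\ldots,z_4$, and the outputs $emit_X,emit_O$, and then read off that these agree with the intended decoding. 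Once this table is complete, the induction closes and the theorem follows.
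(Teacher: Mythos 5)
\begin{quote}
Your proposal is correct and follows essentially the same route as the paper: the paper's proof likewise reduces the theorem to (i) an invariant-based correctness argument for the handcrafted pseudocode against the topological Markov chain and (ii) the equivalence of that pseudocode with the arithmetized deductron, leaving the finite case analysis to the reader. You merely merge these two steps by running the induction directly on the four memory cells of Algorithm~\ref{alg:three-layer-final}, which is a presentational rather than a substantive difference.
\end{quote}
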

\begin{proof}
  As we constructed the deductron by writing an equivalent pseudocode,
  we prove first that one of the presentations of the algorithm, e.g.,
  Algorithm~\ref{alg:basic-decoding}, decodes the W-language
  correctly. The proof is not difficult and it uses the formal
  definition, which is essentially
  Figure~\ref{fig:topological-markov-chain}. The tools to do so, such
  as invariants, are standard in computer science.  Another part of
  the proof is to show that the neural network yields the same
  decoding as the pseudocode, even if the real arithmetic is only
  approximate. The details are left to the reader.
\end{proof}
\begin{exercise}[$3$ memory cells suffice for W-language]
  Prove that there exists a deductron with $3$ memory cells
  correctly decoding W-language. For instance, write a different
  conventional program which uses fewer variables, and convert
  it to a $3$-cell deductron.
\end{exercise}
\begin{exercise}[$2$ memory cells insufficient for W-language]
  Prove that there is no deductron with $2$ memory cells, which correctly
  decodes every expression of the $W$-language.
\end{exercise}
\begin{exercise}[$2$ memory cells suffice for strict W-language]
  Prove that there is a deductron with $2$ cells, which correctly
  decodes every expression of the strict $W$-language.
\end{exercise}
\begin{exercise}[$1$ memory cell insufficient for strict W-language]
  Prove that there is no deductron with $1$ memory cell, which correctly
  decodes every expression of the strict $W$-language.
\end{exercise}
As a hint for the previous exercises, we suggest studying Shannon
information theory. In particular, the Channel Coding Theorem gives us
the necessary tools to obtain a bound on the number of memory
cells. Essentially, the memory is the ``bottleneck'' for passing
information between inputs and outputs. Of course, information is
measured in bits and it does not need to be a whole number.

Upon considering the structure of the neural network based on
perceptron layers and the new V-gate seen in
Figure~\ref{fig:deductron}, we can see that our network is a 3-layer
network. The first and third layer are perceptron layers, thus
performing binary linear classification. We will call the first
perceptron layer \emph{the input layer} and the third layer \emph{the
  output layer}.

\begin{figure}
  \begin{center}
    \usetikzlibrary{shapes.geometric}
\usetikzlibrary{arrows,positioning,shapes.geometric,calc,decorations}

\tikzstyle{demux} = [ trapezium, draw,   
shape border rotate = 90, trapezium angle = 60,  
inner ysep=0pt, outer sep=0pt, inner xsep=2pt, 
text width = 3em, 
node distance=2.5cm, font=\large, align=right ]

\tikzstyle{terminal} = [rectangle, draw, rounded corners=5pt, node distance=2.5cm ]

\tikzstyle{v-gate} = [rectangle, draw, text width = 3em, node distance=3cm, font = \large,
minimum height=2cm, align=center]

\tikzstyle{memory} = [node distance=3cm]

\tikzstyle{binary-classifier} = [rectangle, draw, text width = 2em, node distance = 2.5cm, font = \large,
minimum height=2cm, align=center]

\tikzstyle{weight-bias} = [rectangle, rounded corners = 10pt, draw, node distance = 2.5cm, font = \large,
minimum size=2cm, align=center]

\tikzset{
  pics/.cd,
  deductron/.style = {
    code={
      \node[terminal] (input-#1) {$in_{#1}$};
      \node[binary-classifier, right of=input-#1] (input-classifier-#1) {$P_{#1}'$};
      \node[demux, right of = input-classifier-#1] (demux-#1) {L\smallskip \\ \smallskip R};
      \coordinate (left-#1) at ($(demux-#1.east) + (0, 0.5)$);
      \coordinate (right-#1) at ($(demux-#1.east) + (0, -0.5)$);
      \node[v-gate,right of = demux-#1](v-gate-#1) {$V$};
      \coordinate (memory-#1) at ($(v-gate-#1.east) + (2,0)$);
      \node[binary-classifier,right of = memory-#1] (output-classifier-#1) {$P_{#1}''$};
      \node[terminal, right of=output-classifier-#1] (output-#1) {$out_{#1}$};

      \path[draw,->] (input-#1) --  (input-classifier-#1.west)  node[below, near end]{$x_{#1}$};
      \path[draw,->] (input-classifier-#1) -- (demux-#1) node[below, near end]{$h_{#1}$};
      \path[draw,->] (left-#1) -- ($(v-gate-#1.west)+(0,0.5)$) node[below, near end]{$u_{#1}$};
      \path[draw,->] (right-#1) -- ($(v-gate-#1.west)+(0,-0.5)$) node[below, near end]{$v_{#1}$};
      \path[draw,->] (v-gate-#1) -- (memory-#1) node[below, near start]{$z_{#1}$};
      \path[draw,->] (memory-#1) -- (output-classifier-#1) node[below, near end]{$z_{#1}$};
      \path[draw,->] (output-classifier-#1) -- (output-#1) node[below, near end]{$o_{#1}$};
    }
  }
}

\begin{tikzpicture}
  \foreach \y in {0, 1, ..., 4}
  \pic[scale=.8] at (0,-3*\y) {deductron=\y};
  \node[rectangle,draw,node distance=3cm, above of=memory-0](memory--1){$0$};

 \foreach \y [count=\yy from -1] in {0,1,2,...,4}
 \path[draw,->] (memory-\yy) -- ++ (0,-1.3) -| (v-gate-\y) node[left, near end]{$z_{\yy}$};

 \node[weight-bias,above of=input-classifier-0](input-weight-bias){$(W_1,b_1)$};
 \node[weight-bias,above of=output-classifier-0](output-weight-bias){$(W_2,b_2)$}; 

 \path[draw,red!90,thick] (input-weight-bias)-- ++(-1.5,0) |- ++(0,-14);
 \path[draw,red!90,thick] (output-weight-bias)-- ++(-1.5,0) |- ++(0,-14); 

 \foreach \y in {0,1,2,...,4} {
   \path[draw,->,red!90,thick] ($(input-classifier-\y.west) +(-1,0.5)$) -- ($(input-classifier-\y.west)+(0,0.5)$);
   \path[draw,->,red!90,thick] ($(output-classifier-\y.west) +(-1,0.5)$) -- ($(output-classifier-\y.west)+(0,0.5)$);
 }

\end{tikzpicture}

  \end{center}
  \caption{The deductron neural network architecture suitable for
    BPTT.  The deductron is replicated as many times as there are
    inputs (frames).  The perceptron $P_j'$ classifies the input
    vectors $x_j$, which are subsequently demuxed and fed into the
    V-gates $V_j$, along with the content of the memory $z_{j-1}$, and
    the output is sent to memory $z_j$.  The output of $z_j$ is fed
    into the output classifier $P_j''$.  and output vector $o_j$ is
    produced. Perceptrons $P_j'$ share common weights $W_1$ and biases
    $b_1$. Perceptrons $P_j''$ share common weights $W_2$ and biases
    $b_2$. \label{fig:deductron}}
\end{figure}

The middle layer is a new layer containing V-gates. We will call this
layer the \emph{inference layer}, as indeed it is capable of formal
deduction of predicate calculus. We now proceed to justify this
statement.

The general architecture based on $V$ gate is quite simple and it comprises:
\begin{enumerate}
\item 
  the input perceptron layer, producing $n_{hidden}=2\,n_{memory}$ paired values
  $u_i$ and $v_i$, $i=1,2,\ldots, n_{memory}$, where $n_{memory}$ is
  the number of memory cells;
\item
  the inference (memory) layer, consisting of $n_{memory}$ memory cells whose
  values persist until modified by the action of the V-gates; the update rule
  for the memory cells is
  \[ z_i \gets V(z_i, u_i, v_i) \]
  for $i=1,2,\ldots, n_{memory}$.
\item
  the output perceptron layer, which is a binary classifier
  working on the memory cells.
\end{enumerate}
\begin{algorithm}
  \caption{A $V$-gate simulator. We note that
    $V(z, u, v)=(1-u)(1-v)z + u$. If $z$, $u$ and $v$ are vectors
    of equal length, all operations are performed elementwise.\label{alg:v-gate} }
  \begin{algorithmic}[1]
    \Function{VGate}{$z$, $u$, $v$}
    \State\Return $(1-u)(1-v)z+u$
    \EndFunction
  \end{algorithmic}
\end{algorithm}

The semantics of the neural network can be described by the simulation
algorithm, Algorithm~\ref{alg:architecture-simulator} which expresses
the process of creation of outputs as standard pseudocode.

Figure~\ref{fig:deductron} is a rudimentary systems diagram and can be
considered a different presentation of
Algorithm~\ref{alg:architecture-simulator}, focused on movement of
data in the algorithm. This is especially useful to building circuits
for training deductrons.

\begin{algorithm}
  \caption{A simulator for our $3$-layer deductron network. Note: The
    activation function $S$ operates on vectors
    elementwise. \label{alg:architecture-simulator}}
  \begin{algorithmic}[1]
    \Require
    \Statex $W_1$ is a $2n_{memory}\times n_{in}$ matrix;
    \Statex $b_1$ is a $2n_{memory}\times 1$ column vector;
    \Statex $W_2$ is a $n_{out}\times n_{memory}$ matrix;  
    \Statex $b_2$ is a $n_{out}\times 1$ column vector;
    \Statex $x = [x_0,x_1,\ldots,x_{n_{frames}-1}]$ is a list of $n_{frames}$ frames, which
    are $n_{in}\times 1$ column vectors.
    \Ensure
    \Statex $o=[o_0,o_1,\ldots,o_{n_{frames}-1}]$ is set to a list of $n_{frames}$ outputs, which
    are $n_{out}\times 1$ vectors.

    \For{$t=0,1,2,\ldots,n_{frames}-1$}
    \State $h_{t} = S\left(W_1 x_{t} + b_1\right)$ \Comment{Classify inputs.}
    \State $(u_t, v_t) \gets \Call{Split}{h_t}$ \Comment{Splits (demuxes) $h_{t}$ into 2 vectors of equal length.}
    \EndFor
    \Divider
    \State $z_0 \gets 0$        \Comment{Initialize memory to $0$; $z_0$ is an $n_{memory}\times 1$ column vector.}
    \For{$t=1,2,\ldots,n_{frames}-1$}
    \State $z_{t} \gets \Call{VGate}{z_{t-1}, u_{t}, v_{t}}$ \Comment{$z_t$ is an $n_{memory}\times 1$ column vector.}
    \EndFor
    \Divider
    \For{$t=0,1,2,\ldots,n_{frames}-1$}
    \State $o_{t} = S\left(W_2z_{t} + b_2\right)$ \Comment{Classify memories, produce outputs.}
    \EndFor
  \end{algorithmic}
\end{algorithm}

\section{Interpretability of the weights as logic formulas}
Obviously, it would be desirable if the weights found by a computer
could be interpreted by a human as ``reasonable steps'' to perform the
task. In most cases, formulas obtained by training a neural network
cannot be interpreted in this manner. For once, the quantity of
information reflected in the weights may be too large for such an
interpretation. Below we express some thoughts particular to training
the deductron using simulated annealing on the W-language.

\begin{table}
  \caption{Optimal weights learned by simulated annealing.  We varied
    the inverse temperature from $0$ to $10$, with step $10^{-5}$,
    resulting in $\approx 1$ million iterations.  The number of memory
    units is $3$, and the training input was the one in
    Fig~\ref{fig:binary-image}. \label{fig:learned-weights}}
  \begin{small}
    \verbatiminput{outputs/output0.txt}
  \end{small}
\end{table}

In Table~\ref{fig:learned-weights} we see the weights found by
simulated annealing. When a bias equals the number of $-1$'s in the
corresponding row of the matrix, it is apparent that that row of
weights corresponds to a formula of logic (conjunction of inputs or
their negations). However, in some runs (due to randomization), we
obtain weights which do not correspond to logic formulas.  Clearly,
some of the rows of $W1$ and $B1$ do not have this property.

\begin{example}[Weights and biases obfuscating a simple logic formula]
  Let us consider weights and biases obtained in one numerical experiment:
  \begin{enumerate}
  \item a row of weights $[1, 1, 0, -1, 0, 1]$;
  \item bias $3$.
  \end{enumerate}
  Since two of the weights are $1$, with a single weight
  of $-1$, the activation computed using it is at least $2$.  The
  activation is in the region where $S$ yields a near-zero. Hence, the
  hidden unit constantly yields $0$ (false), thus is equivalent to a
  simple, trivial propositional logic formula ($\False$).
\end{example}
\begin{example}[Weights and biases without an equivalent conjunction]
  Let us consider weights and biases obtained in one numerical experiment:
  \begin{enumerate}
  \item a row of weights $[0, 1, -1, 1, ,1 -1]$;
  \item bias $1$.
  \end{enumerate}
  Thus the activation is $x_1-x_2+x_3+x_4-x_5 + 1$. Assuming that $x_j\in\{0,1\}$,
  there is no conjunction of $x_j$, $\neg x_j$, or $\True$, $j=0,1,\ldots,5$,
  equivalent to this arithmetic formula (the reader is welcome to prove this).

  Nevertheless, there is a complex logical formula which is a
  disjunction of conjunctions, true only for solutions of this
  equation. This demonstrates that the logical formulas expressing the
  arithmetic equation can be more complex than just conjunctions, as in
  our manually constructed program. It is clear that any arithmetic
  linear equation or inequality over rational numbers can be expressed as a
  single logical formula in disjunctive normal form (disjunction of
  conjunctions).
\end{example}
\begin{exercise}[Disjunction of conjunctions for a linear inequality]
  Consider the linear inequality
  \[ x_1-x_2+x_3+x_4-x_5 + 1 > 0 \]
  over the domain $x_j\in\{0,1\}$, $j=0,1,\ldots,5$. Construct
  an equivalent logical formula, which is a conjunction of disjunctions
  of some of the statements $x_j=0$ or $x_j=1$.
\end{exercise}

Algorithm~\ref{alg:architecture-simulator} defines a class of programs
parameterized by weights and biases. The program expressed by
Algorithm~\ref{alg:basic-decoding-arithmetical-final} can be obtained
by choosing the entries of the weight matrices
$W_1=\left[w_{ij}^{(1)}\right]$ and $W_2 =\left[w_{ij}^{(2)}\right]$
and the bias vectors $b_1=\left[b_i^{(1)}\right]$ and
$b_2=\left[b_i^{(2)}\right]$ so that:
\begin{enumerate}
\item each weight $w_{ij}^{(k)}$, $k=1,2$, is chosen to be $\pm 1$ or $0$;
\item each entry $b_{j}^{(k)}$ is chosen to be the count of $-1$'s in
  the $i$-th row of the matrix $W_k$.
\end{enumerate}
With these choices, the matrix product expresses the value of a formula
of propositional calculus. This is implied by the following:
\begin{lemma}[Arithmetic vs. logic]
  \label{thm:arithmetic-vs-logic}
  Let $W$, $b$, $x$ and $y$ be real matrices such that:
  \begin{enumerate}
  \item $W=\left[w_{ij}\right]$, $1\leq i \leq m$, $1\leq j \leq n$, $w_{ij}\in\{-1,0,1\}$; 
  \item $x=\left[x_j\right]$, $1\leq j\leq n$, $x_j\in\{0,1\}$;
  \item $b=\left[b_i\right]$, $1\leq i\leq m$, where $b_i$ is the count of $-1$ amongst $w_{i1},w_{i2},\ldots,w_{in}$;
  \item $y = Wx$.
  \end{enumerate}
  Then
  \[ y_i = \sum_{j=1}^{n} w_{ij} x_j = \bigwedge_{j=1}^{n} x_{ij}, \qquad 1\leq i \leq m \]
  where
  \begin{equation*}
    x_{ij} =
    \begin{cases}
      x_j      & w_{ij} = 1,\\
      \neg x_j & w_{ij} = -1,\\
      \True    & w_{ij} = 0.
    \end{cases}
  \end{equation*}
\end{lemma}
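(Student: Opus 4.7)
The plan is to verify the claim one row at a time, since the rows of $W$ act independently on $x$. Fixing $i$, the first equality $y_i = \sum_{j=1}^n w_{ij} x_j$ is just the definition of matrix--vector multiplication, so the substance of the lemma lies entirely in the second equality. I would partition the column index set into $J^+ = \{j : w_{ij} = 1\}$, $J^- = \{j : w_{ij} = -1\}$, and $J^0 = \{j : w_{ij} = 0\}$; the hypothesis on $b$ then reads $|J^-| = b_i$.

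Rewriting both sides using this partition, the zero-weight indices contribute $0$ to the sum and the identity $\True$ to the conjunction, so they drop out on both sides. What remains is
\[ y_i \;=\; \sum_{j \in J^+} x_j \;-\; \sum_{j \in J^-} x_j \quad\text{and}\quad \bigwedge_{j=1}^{n} x_{ij} \;=\; \Bigl(\bigwedge_{j \in J^+} x_j\Bigr) \wedge \Bigl(\bigwedge_{j \in J^-} (1 - x_j)\Bigr). \]
From here I would proceed by case analysis on whether every literal is satisfied. If every $x_j$ with $j \in J^+$ equals $1$ and every $x_j$ with $j \in J^-$ equals $0$, then the right-hand conjunction is $1$ and the left-hand sum attains its maximum value $|J^+|$. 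Otherwise at least one literal is violated, the conjunction is $0$, and the sum strictly drops below $|J^+|$ by exactly the number of violations (each positive-weight failure subtracts $1$, each negative-weight failure subtracts $1$).

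The main obstacle is reconciling the integer-valued sum $y_i$, which in general ranges over the interval $[-b_i, |J^+|]$, with the $\{0,1\}$-valued conjunction on the right; the literal equality as written must be read through the arithmetic-to-logic dictionary developed in the preceding section. The hypothesis $b_i = |J^-|$ is the precise shift that makes this dictionary exact: it recasts the test ``all literals satisfied'' as the single arithmetic condition that $y_i$ attains its maximum, equivalently that $y_i + b_i = |J^+| + |J^-|$, which is what the perceptron activation $S$ is designed to detect. Once this normalization is fixed, the case analysis above is routine for a single row, and repeating it for each $i = 1, \dots, m$ assembles the full claim.
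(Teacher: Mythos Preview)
The paper's own proof is simply ``Left to the reader,'' so there is no argument to compare against; your proposal stands on its own merits. Your approach---fixing a row $i$, partitioning the column indices into $J^+$, $J^-$, $J^0$ by the sign of $w_{ij}$, discarding $J^0$ on both sides, and then doing a two-case analysis on whether every literal is satisfied---is the natural one and carries through cleanly.

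You are also right to flag that the displayed equality cannot be read literally: $y_i=\sum_j w_{ij}x_j$ is an integer in $[-|J^-|,\,|J^+|]$, whereas the conjunction lies in $\{0,1\}$. Your diagnosis is exactly correct---the statement only makes sense after incorporating the bias $b_i=|J^-|$ and the activation $S$ from the preceding section, and your observation that the conjunction is $1$ precisely when $y_i+b_i=|J^+|+|J^-|$ (equivalently, when the shifted sum is extremal) is the key computation. In short, your write-up is already more than the paper provides; the only improvement would be to state explicitly at the outset which corrected formulation of the identity you are proving (e.g., $S\!\bigl(-\sum_j w_{ij}x_j + |J^+|\bigr)=\bigwedge_j x_{ij}$, or the equivalent variant with $b_i$), so that the reader is not left to infer it from the surrounding discussion.
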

\begin{proof}
  Left to the reader.
\end{proof}
\begin{exercise}[A formula for biases]
  Prove that under the assumptions of Lemma~\ref{thm:arithmetic-vs-logic}.
  \[ b_i = \sum_{j=1}^n g(w_{ij}) \]
  where
  \[ g(w) = \frac{w(w-1)}{2} = \binom{w}{2}.\]
\end{exercise}

\section{Machine learning}
It remains to demonstrate that the neural network architecture is
useful, i.e., that it represents a useful class of programs, and that
the programs can be learned automatically. To demonstrate supervised
learning, we applied simulated annealing to learn the weights of a
program which will solve the decoding problem for the W-language with
100\% accuracy.

We used the target vector $t$ corresponding to the input presented in
Figure~\ref{fig:sample2}. The target vector is simply the output
of the handcrafted decoding algorithm.

We restricted the weights to values $\pm1$. The biases were restricted
to the set $\{0,1,2,3,4,5\}$. The \emph{loss (error) function} is the quantity
\[ loss = \sum_{f=1}^{n_{frames}}\sum_{i=1}^{n_{out}} \left|t_i^{(f)}-o_i^{(f)}\right|^{\gamma} \]
(we only used $\gamma=1$ and $\gamma=2$ in the current paper, with approximately the same results)
where $n_{frames}$ represent the number of $6$-pixel frames constructed
by considering a sliding window of $2$ consecutive columns of the image.
We note that $t^{(f)}$ and $o^{(f)}$ are the target and output
vectors for frame $f$, respectively. It should be noted
that sequences are fed to the deductron in a specific order,
in which the memory will be updated, thus the order cannot be changed.
For zero temperatures, $t$ and $o$ are vectors with values $0$ and $1$
and the energy function reduces to the Hamming distance.

The energy function is thus the function of the weights. The perceptron
activation function was set to
\begin{equation}
  \label{eqn:falling-sigmoid}
  S(x) = \frac{1}{1 + \exp(\beta(x-0.5))}
\end{equation}
with a graph portrayed in Figure~\ref{fig:falling-sigmoid}:
\begin{figure}[htb]
  \caption{The sigmoid function $S(x)=\frac{1}{1+\exp(\beta(x-0.5))}$
    for $\beta=5$ and $\beta=15$.\label{fig:falling-sigmoid}}
  \begin{center}
    \begin{tikzpicture}
      \begin{axis}[axis lines=left, xmin=0,xmax=1,ymin=-0,ymax=1,samples=300]
        \addplot[blue, ultra thick]  { 1 / ( 1 + exp ( 5 * (x - 0.5) ) )} ;
        \addplot[green, ultra thick]  { 1 / ( 1 + exp ( 15 * (x - 0.5) ) )} ;        
      \end{axis}
    \end{tikzpicture}
  \end{center}
\end{figure}
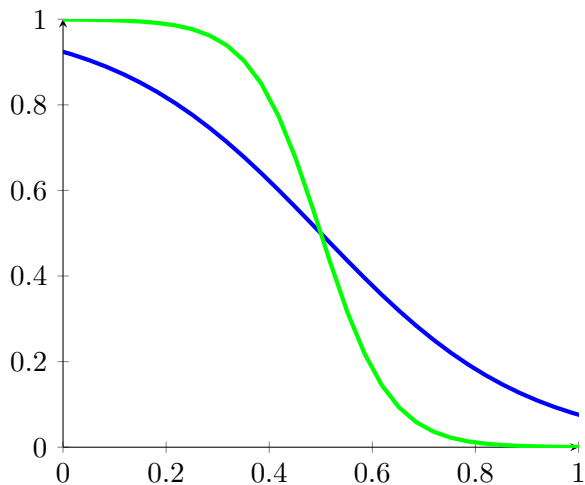

Here $\beta$ represents the \emph{inverse temperature} of simulated
annealing. This sigmoid function in the limit $\beta\to\infty$ becomes
the function
\[ S(x) = \begin{cases}
    1 & \text{when $x<1/2$,}\\
    0 & \text{when $x>1/2$,}\\
    1/2& \text{when $x=1/2$.}
  \end{cases}
\]
The simulated annealing program finds the system of weights presented
in Figure~\ref{fig:learned-weights}.

\begin{table}
  \caption{Optimal weights found by reading them off from the the
    handcrafted decoder
    (Algorithm~\ref{alg:basic-decoding-arithmetical-final}).
    \label{fig:hand-coded-weights}}
  \begin{small}
    \verbatiminput{outputs/output2.txt}
  \end{small}
\end{table}

As it is seen, the energy was reduced to approximately $10^{-3}$ which
is a guarantee that all responses have been correct. The outputs are
presented alongside with inputs in
Figure~\ref{fig:learned-weights-output}. For comparison, the weights
directly read from the program
Algorithm~\ref{alg:basic-decoding-arithmetical-final} are in
Figure~\ref{fig:hand-coded-weights}.  Clearly, the weights learned by
simulated annealing differ from the handcrafted weights.  However,
they both reproduce equivalent results.  Interestingly, both programs
correctly decode output of the topological Markov chain presented in
Figure~\ref{fig:topological-markov-chain}, with approximately 500
frames. The sample constructed contains ``stretched'' characters 'X'
and 'O' obtained by repeating falling and rising spans of random
length. Thus, the weights constructed by simulated annealing learned
how to solve the more general problem than indicated by the sole
example used as a training set.
\begin{table}
  \caption{\label{fig:learned-weights-output}The output of the simulator using
    optimal weights constructed by simulated annealing (Figure~\ref{fig:learned-weights}).
    The first six columns contain the inputs (linearized sliding windows)
    for input depicted in Figure~\ref{fig:binary-image}.
    The next two columns are the target values. Finally, we
    identfy steps where we emit 'X' or 'O' in the right column.}
  \begin{small}
    \verbatiminput{outputs/output4.txt}
  \end{small}
\end{table}
It should be noted that we search for a network with the same architecture
as the network which we constructed by hand: inputs of length $6$, outputs
of length $2$, and $4$ memory cells. This perhaps made the search easier.
However, it should also be noted that the search space has $48+8=56$ 
weights and $8+2=10$ biases. Since input weights are restricted to $3$ values
and biases to $6$ values, the total search space has
\[ 3^{56}\cdot 6^{10} \approx 3.16\cdot 10^{34}\]
nodes. Thus, our search, which terminated in minutes, had a sizeable
search space to explore (some variations led to much quicker times, in
the $10$ second range). Furthermore, repeated searches found only $2$
perfect solutions.  It is quite possible that the number of solutions
is very limited for the problem at hand, perhaps only a few.

In our solution we used a simple rule for state modification: we
simply modified a random weight or bias, by randomly choosing an
admissible value: $\{\pm1,0\}$ for weights and $\{0,1,\ldots,5\}$ for
biases. The recommended rule is to try to stay at nearly the same
energy, but for our example this did not seem to make significant
difference for the speed or quality of the solution. At some point, we
tried to tie the values of the biases to be the number of $-1$'s in
the corresponding row of the weight matrix, motivated by biases that
come out of arithmetization of formulas of boolean logic.  It turns
out that this results in significantly less successful outcome, and it
appears important that the weights and biases can be varied
independently.

\section{Continuous weights}
In the current section we allow the weights of the deductron to be
real numbers. As we can see, there is no need for $\beta$ (the inverse
temperature), as it can be easily absorbed by the weights.  Similarly,
the shift of $0.5$ used in our falling sigmoid $S$
(see~\eqref{eqn:falling-sigmoid}) can be absorbed by the biases.
Also, we choose to use the standard, rising sigmoid function:
\begin{equation}
  \label{eqn:rising-sigmoid}
  \sigma(x) = \frac{1}{1 + e^{-x}}
\end{equation}
This necessitates taking the complement of 1 when computing the
output of the net.

The loss (error) function is simply the sum of squares of errors:
\begin{equation}
  \label{eqn:loss-function}
  loss = \sum_{f=1}^{n_{frames}}\sum_{i=1}^{n_{out}} \left(t_i^{(f)}-o_i^{(f)}\right)^2. 
\end{equation}

\begin{exercise}[Gradient of loss]
  Using Figure~\ref{fig:deductron}, find the gradient of the loss
  function given by~\eqref{eqn:loss-function} over the parameters.
  That is, find the formulas for the partial derivatives:
  \begin{align*}
    \frac{\partial\, loss}{\partial\, w_{ij}^{(k)}},\\
    \frac{\partial\, loss}{\partial\, b_{i}^{(k)}},
  \end{align*}
  where $W_k = \left[w_{ij}^{(k)}\right]$ and $b_k =\left[ b_{i}^{(k)}\right]$, $k=1,2$ are
  the weight matrices and bias vectors of the deductron.
\end{exercise}

The above exercise is important when one to wants to implement a
variation of Gradient Descent in order to find optimal weights and
biases.  The mechanics of differentiation is not particularly
interesting. However, for complex neural networks it represents a
challenge when implemented by manual application of the Chain
Rule. Therefore, a technique called \textbf{automatic differentiation}
is used, which essentially implements the Chain Rule in software. The
computer manipulates the formulas expressing loss to obtain the
gradient.  The system Tensorflow \cite{tensorflow2015-whitepaper}
provides the facility to carry it out with a minimum amount of effort
and allows for quick modification of the model.  In contrast, the
human would have to essentially repeat the calculations manually for
each model variation, which inhibits experimentation.

Following the documentation of Tensorflow, we implemented training
of a deductron RNN, closely following Figure~\ref{fig:deductron}.
The implementation details are presented in Appendix~\ref{appendix:python}.

\begin{exercise}[Generating W-language samples with interval maps]
  Use the interval map in Figure~\ref{fig:chaotic-mapping} to generate
  samples of the W-language, like the one below:
  \begin{center}
    \begin{tikzpicture}[scale=0.52]
      \node {\scalebox{1}[-1] {
          \includegraphics[width=\textwidth,height=0.5cm]{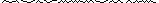}}};
    \end{tikzpicture}
  \end{center}
  Assume that the image begins and ends with exactly one blank (not
  shown). The image has exactly 155 columns.  This sample
  should decode to the following decoded message, with 'X' and 'O'
  appearing at the time of their emission:
  \begin{center}
    \begin{inact}
      ____XX__________X__________OOOO\\
      _______X________OOO___________X\\
      ___X_____X___X___X______X___XX_\\
      _______OOOO________X_____XX____\\
      ____XX______X_____X___X___X____\\
      ____
    \end{inact}
  \end{center}
  Use the samples generated with the interval map instead of the samples
  generated with a random number generator to train the Deductron
  to recognize the W-language. NOTE: You will have to slightly perturb the
  mapping of the interval, as multiplication by 2 and 4 leads to rapid decay
  of the precision on computers using base-2 arithmetic, ending in a constant
  sequence after several dozen of iterations.
\end{exercise}

\section{Conclusions}
In our paper we constructed a non-trivial and mathematically rigorus
example of a class of image data representing encoded messages which
requires long-term memory to decode. 

We constructed a conventional computer program for decoding the data.
The program was subsequently translated to a Recurrent Neural Network.
Subsequently, we generalized the neural network to a class of neural
networks, which we call \emph{deductrons}. A deductron is called that
because it is a 3-layer neural network, with a middle layer capable
of simple inferences.

Finally, we demonstrated that our neural networks can be trained by
using global optimization methods. In particular, we demonstrated that
simulated annealing discovers an algorithm which decodes the class of
inputs with 100\% accuracy, and is logically equivalent to our first
handcrafted program. We also showed how to train deductrons using
Tensorflow and Adam optimizer.

Our analysis opens up a direction of research on RNN which have more
clear semantics than other RNN, such as LSTM, with a possibility of
better introspection into the workings of the optimal programs. It is to be
determined whether our RNN is more efficient than LSTM. We conjecture
that the answer is ``yes'' and that our architecture is a class of RNN
which can be trained faster and understood better from the theoretical
standpoint.

\bibliographystyle{plain}
\bibliography{deductron_paper.bib}
\newpage
\appendix
\section{Python codes}
\label{appendix:python}
We present some programs which illustrate in detail the approaches
explained in the current paper. Many of the programs require test data
in the file named {\tt data.py}. The listing of this file is not
included in the paper due to its large length, but it accompanies the
paper as a separate file, along with all Python code listed in the
paper.

\subsection{A simple Deductron simulator}
The necessary operations to implement a deductron simulation, and the
loss function, are defined in this Python code:
\begin{small}
\lstinputlisting[style={python-style}]{Python/deductron_base.py}
\end{small}
The class \emph{WLangDecoderExact} derived from \emph{DeductronBase}
defines the deductron with a particular set of weights derived in this
paper to perform exact decoding of the W-language. The following
code explains how to use the class to compute the loss on sample
inputs; in addition to the exact decoder it contains several
decoders which are a result of various ways to train the RNN:
\begin{small}
\lstinputlisting[style={python-style}]{Python/deductron.py}
\end{small}
\subsection{Training implemented in Python}
This is a ``pure'' Python implementation, utilizing only \emph{numpy}.
The training algorithm is a version of simulated annealing.  The
weights and biases are quanitized as described in the current paper.
Most of the code is devoted to picking a neighbor of the
deductron obtained by choosing one of the weights or biases, and
replacing it with a randomly chosen admissible value. The algorithm
keeps track of the best state found so far. If it gets stuck not
finding a lower energy state for a long time, it restarts with
the best state so far, thus implementing a form of backtracking.

The following code implements simulated annealing as a training
method for the deductron, and a simple function \emph{test\_annealing}
which trains the network on given training data, used to
obtain several sample weight/bias combinations listed in this paper.
\begin{small}
\lstinputlisting[style={python-style}]{Python/annealing.py}
\end{small}
The output is illustrated in Figure~\ref{fig:learned-weights}.

\subsection{Training implemented in Tensorflow}
The following Python/Tensorflow \cite{tensorflow2015-whitepaper} code
provides training to the deductron.
\begin{small}
\lstinputlisting[style={python-style}]{Python/deductron_train.py}
\end{small}
We instrumented the code with logging \emph{summaries}, which can be
used to visualize learning in a standard browser progress using
Tensorboard (a log-viewing program which is typically distributed
with Tensorflow).


\newpage

\section{Peephole LSTM Architecture}
\label{appendix:LSTM}
The flow of data in an LSTM is illustrated by
the following diagram (\cite{wiki:xxx}):
\begin{center}
  \includegraphics[width=.7\linewidth]{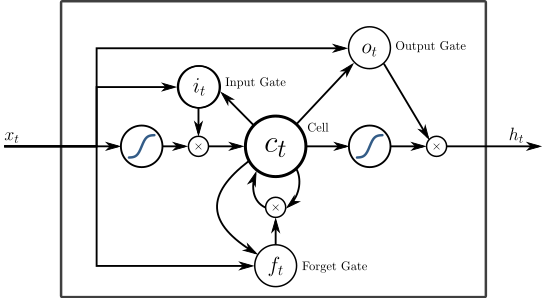}
\end{center}
The formulas expressing the data transformations are:
\begin{align}
  f_t &= \sigma_g(W_{f} x_t + U_{f} c_{t-1} + b_f) \\
  i_t &= \sigma_g(W_{i} x_t + U_{i} c_{t-1} + b_i) \\
  o_t &= \sigma_g(W_{o} x_t + U_{o} c_{t-1} + b_o) \\
  c_t &= f_t \circ c_{t-1} + i_t \circ \sigma_c(W_{c} x_t + b_c) \\
  h_t &= o_t \circ \sigma_h(c_t)
\end{align}
Every quantity is a vector. The symbol ``$\circ$'' stands for the
Hadamard (elementwise) product.  Thus, to perform the product, the
vectors have to have the same length.
The functions $\sigma_{*}$ are sigmoid activation functions.

\newpage
\newpage

\end{document}
